\numberwithin{equation}{section}
\theoremstyle{plain}
\newtheorem{proposition}{Proposition}
\theoremstyle{definition}
\theoremstyle{remark}
\newtheorem{remark}{Remark}
\title{AB-Cache: Training-Free Acceleration of Diffusion Models via Adams-Bashforth Cached Feature Reuse}
\author[1]{Zichao Yu \thanks{ zichaoyu@mail.ustc.edu.cn}}
\author[1]{Zhen Zou}
\author[2]{Guojiang Shao \thanks{gjshao@m.fudan.edu.cn}}
\author[1]{Chengwei Zhang}
\author[3]{Shengze Xu}
\author[1]{Jie Huang}
\author[1]{Feng Zhao}
\author[4]{Xiaodong Cun}
\author[1]{Wenyi Zhang\thanks{wenyizha@ustc.edu.cn}}
\affil[1]{University of Science and Technology of China, Hefei, China}
\affil[2]{Fudan University, Shanghai, China}
\affil[3]{The Chinese university of HongKong, HongKong, China}
\affil[4]{Great Bay University, Dongguan, China}
\date{}
\begin{document}

\maketitle

\begin{abstract}
Diffusion models have demonstrated remarkable success in generative tasks, yet their iterative denoising process results in slow inference, limiting their practicality. While existing acceleration methods exploit the well-known U-shaped similarity pattern between adjacent steps through caching mechanisms, they lack theoretical foundation and rely on simplistic computation reuse, often leading to performance degradation. In this work, we provide a theoretical understanding by analyzing the denoising process through the second-order Adams-Bashforth method, revealing a linear relationship between the outputs of consecutive steps. This analysis explains why the outputs of adjacent steps exhibit a U-shaped pattern. Furthermore, extending Adams-Bashforth method to higher order, we propose a novel caching-based acceleration approach for diffusion models, instead of directly reusing cached results, with a truncation error bound of only \(O(h^k)\) where $h$ is the step size.   
Extensive validation across diverse image and video diffusion models (including HunyuanVideo and FLUX.1-dev) with various schedulers demonstrates our method's effectiveness in achieving nearly $3\times$ speedup while maintaining original performance levels, offering a practical real-time solution without compromising generation quality.
\end{abstract}

\medskip
  \noindent{\bf Keywords}:Adams-Bashforth method, Caching-based acceleration approach, Diffusion model, Linear approximation relationship, Training-free

\section{Introduction}  
In recent years, diffusion models~\citep{ho2020denoising,song2019generative,song2020score} have emerged as a powerful framework in generative tasks, owing to their exceptional ability to generate high-quality, diverse outputs with strong theoretical underpinnings. These models excel in capturing intricate data distributions~\citep{song2021maximum}, making them highly effective for applications such as image synthesis~\citep{chen2023pixart,rombach2022high}, audio generation~\citep{liu2024audioldm,tian2025audiox}, and text-to-video tasks~\citep{yang2024cogvideox,kong2024hunyuanvideo,hacohen2024ltx}. Architecturally, diffusion models have evolved significantly, transitioning from the foundational U-Net backbone to the more advanced Diffusion Transformer (DiT)~\citep{peebles2023scalable}, which incorporates transformer-based architectures~\citep{vaswani2017attention} for improved scalability and performance. Notable examples of their application include
DALL·E 2 and Stable Diffusion for efficient and photorealistic text-to-image synthesis. However, despite these strengths, a key drawback of diffusion models is their slow inference speed, which stems from the iterative denoising process, limiting their practicality for large-scale, real-time applications.

To address this challenge, recent research has broadly categorized methods for accelerating diffusion model inference into two main approaches: reducing the number of sampling steps and lowering the inference cost per sampling step. The first category focuses on designing advanced probabilistic flow Ordinary Differential Equation (ODE) solvers or employing distillation techniques to condense the denoising trajectory into fewer steps. Methods in the second branch primarily focus on compressing the model size~\citep{kim2023architectural} or utilizing low-precision data formats~\citep{li2023q}.
A novel approach for dynamically adjusting the computation process in diffusion models introduces a specialized caching mechanism during the denoising process~\citep{ma2024deepcache,li2023faster}. This method capitalizes on the high similarity between consecutive steps and the unique properties of U-Net or DiT to cache intermediate computations, which are then efficiently reused in subsequent steps. Alternatively, certain dynamic inference strategies utilize a range of diffusion models, assigning specialized networks to individual steps~\citep{li2023autodiffusion,liu2023oms}.

Despite the proven effectiveness of caching-based acceleration methods for diffusion model inference, two critical concerns remain: 1) The high similarity of certain features observed during the denoising process is primarily experimental, lacking solid theoretical analysis. 2) Current caching mechanisms mainly focus on identifying which modules exhibit higher feature similarity or determining the specific steps where caching should be applied, followed by simply reusing the previously cached computation results. To close this gap, this paper addresses both concerns by providing a theoretical foundation for the observed feature similarities and proposing a simpler but efficient caching mechanism that goes beyond simply reusing previously cached computation results.  

Current literature on caching-based acceleration for diffusion models has experimentally observed a U-shaped phenomenon during the inference process. Unlike previous works~\citep{shen2024lazydit}, which analyze the lower bound of similarity between outputs at consecutive steps and highlight its notably high value, we adopt a numerical integration-based approach. By examining the integral equations underlying the denoising process in diffusion models, we provide a fresh perspective to address this phenomenon. Rather than focusing on the properties of local modules within the network, we treat the network as a whole. Leveraging the second-order Adams-Bashforth method, we discover that the outputs of adjacent steps not only exhibit strong similarity but also follow a linear relationship. Inspired by this insight, we propose a novel yet simple and effective method to accelerate the inference process of diffusion models in a training-free manner, independent of the network architecture. Specifically, by introducing a caching mechanism during the denoising process, rather than merely reusing cached computation results, we perform a binomial combination of the cached results from the previous $k$ steps. This approach achieves significant acceleration through a straightforward linear operation, all without compromising the quality of the generated outputs, with a truncation error of only $O(h^k)$, where $h$ is the
step size.  

We evaluate the effectiveness of our method across a wide range of diffusion models, spanning from image diffusion models to video diffusion models. Experimental results demonstrate that our method achieves significant improvements in inference speed while maintaining comparable performance to the base model without acceleration.

In summary, our contributions are as follows:
\begin{itemize}
    \item Our method establishes a theoretical foundation for the observed feature similarities in diffusion models. By employing numerical integration techniques, we not only confirm the high similarity between adjacent steps but also reveal a linear relationship governing their outputs during the denoising process.
    \item We propose a training-free and architecture-agnostic approach to accelerate diffusion model inference. By leveraging a caching mechanism and a simple linear operation, our method significantly reduces computational costs while maintaining high-quality generation.
    \item Our method is extensively evaluated on a wide range of diffusion models, achieving state-of-the-art results in inference speed without sacrificing quality.
\end{itemize}

\section{Related Works}
Diffusion probability models~\citep{ho2020denoising,song2019generative,song2020score} have achieved remarkable success in generative tasks, initially excelling in image synthesis~\citep{rombach2022high,chen2023pixart,ramesh2022hierarchical} through their ability to model complex data distributions via a gradual denoising process. Early architectures, such as UNet~\citep{ronneberger2015u}, played a pivotal role in this success, leveraging encoder-decoder structures and skip connections to capture both local and global features, which proved essential for generating high-quality images. 
However, the intrinsic scalability limitations of the UNet architecture have posed challenges for large-scale and high-performance models. 
To address these challenges, the Diffusion Transformer (DiT)~\citep{peebles2023scalable} architecture has emerged as a promising solution. DiT harnesses the strengths of transformer-based~\citep{vaswani2017attention} models, renowned for their scalability and ability to handle large-scale data, significantly enhancing the potential of diffusion models~\citep{yang2024cogvideox,zheng2024open,ma2024latte}.


\begin{figure*}[htbp]
    \centering
    \includegraphics[width=\textwidth]{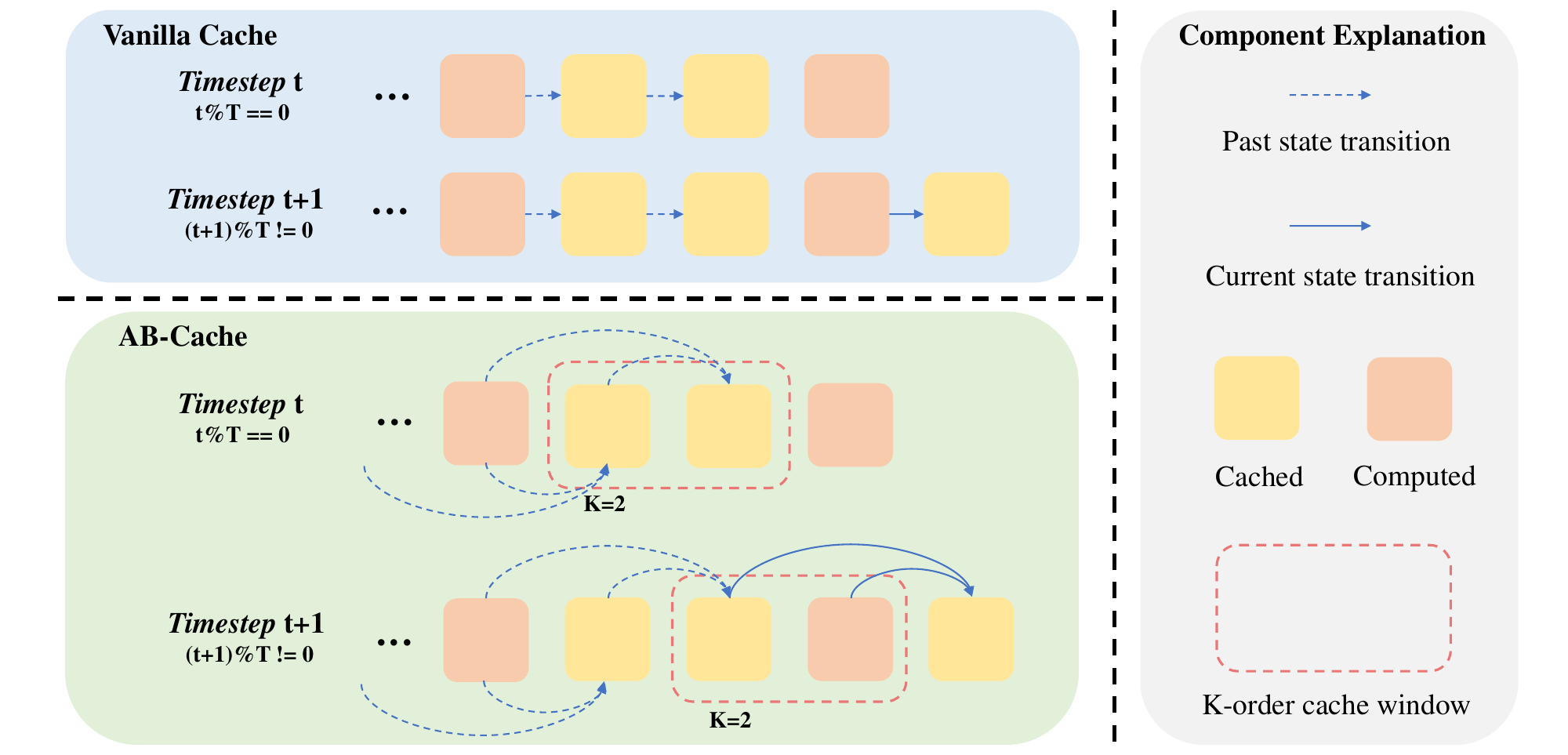} 
    \caption{Similar to the base model, our method computes at timesteps where  \( t\%T == 0 \) and caches states otherwise. However, we replace simple assignment with $k$-th order Adams-Bashforth numerical integration within the cache window, achieving more accurate state transitions while maintaining efficiency, as demonstrated in the K=2 case.}
    \label{fig:pipeline}
\end{figure*}
Despite the remarkable success of diffusion models in image and video generation, their widespread application has been significantly hindered by the substantial computational costs associated with inference. Current research on accelerating diffusion models primarily focuses on two key approaches: reducing the number of sampling steps and minimizing the computational cost of each sampling step.

\textbf{Reduction of Sampling Steps.}The core idea behind this line of work is to maintain high generation quality while significantly reducing the number of sampling steps.  
Many approaches achieve this by employing efficient Stochastic Differential Equation (SDE) or ODE solvers. For instance, DDIM~\citep{song2020denoising} introduced a non-Markovian process that enables faster sampling with fewer steps, while DPM-Solver~\citep{lu2022dpm,lu2022dpmpp,karras2022elucidating} optimized the sampling trajectory using advanced numerical methods. Another direction involves  
distilling the ODE trajectory of the diffusion
sampling process into another neural network that enables
fewer-step sampling. Methods like Progressive Distillation~\citep{salimans2022progressive} and its variants~\citep{song2023consistency,liu2023instaflow,sauer2024adversarial,meng2023distillation,geng2023one} compress the denoising process into fewer steps, significantly reducing inference time. 

\textbf{Reducing Per-Step Cost.}In addition to reducing the number of sampling steps, minimizing the computational cost of each sampling step is another promising approach to accelerating diffusion model inference. These methods can be broadly categorized into two classes: traditional model compression and quantization techniques, and caching-based acceleration methods for diffusion inference.  

\textbf{Model Compression and Quantization.}Model compression techniques, such as pruning~\citep{dong2017learning,lee2019signal,liu2021group}, reduce computational overhead by eliminating less important parts of the model, thereby decreasing inference latency. Similarly, quantization~\citep{bhalgat2020lsq+,li2023q,shang2023post,gu2022vector} achieves this by representing model weights and activations with lower bit precision. However, these methods typically require additional training efforts to fine-tune the model and maintain the generation quality of diffusion models. While effective, this extra training overhead makes it challenging to apply these techniques to large, state-of-the-art models, especially in scenarios with limited computational resources. As a result, alternative approaches, such as caching-based acceleration or architectural optimizations, are often explored to achieve efficient inference without extensive retraining.

\textbf{Caching-based Acceleration.}
Caching-Based Acceleration, a particularly relevant line of work exploits temporal redundancy in the diffusion process by reusing intermediate features between adjacent timesteps. Methods like TGATE~\citep{zhang2024cross} cache and reuse attention outputs at scheduled timesteps, while DeepCache~\citep{ma2024deepcache} and Faster Diffusion~\citep{li2023faster} leverage feature caching mechanisms to indirectly modify the UNet diffusion process, thereby achieving enhanced acceleration.  
Building on these advancements, recent efforts have extended caching-based acceleration to Diffusion Transformer (DiT). For instance, 
$\Delta$-DiT~\citep{chen2024delta} caches the residuals between DiT
blocks, and PAB~\citep{zhao2024real} caches and broadcasts
intermediate computations at different timestep intervals based on the characteristics of varying attention
output.
FasterCache~\citep{lv2024fastercache} accelerates diffusion-based video generation by leveraging a Dynamic Feature Reuse Strategy and exploiting redundancy in the classifier-free guidance (CFG).
These approaches align closely with our work, as they leverage temporal redundancy to reduce computational overhead.

\section{Methodology}

\subsection{Preliminary}
In this subsection, we first review the fundamental concepts of diffusion probability models and their reverse processes.  

\textbf{Diffuison Model.}
Diffusion models are a class of generative models consisting of a forward diffusion process and a reverse generative process. The forward process progressively adds Gaussian noise to a data sample \( \bm{x}_0 \sim p_{\text{data}}(\bm{x}_0) \), eventually destroying its structure. This process is formulated as:
\begin{equation*}
q(\bm{x}_t | \bm{x}_0) = \mathcal{N}(\bm{x}_t; \sqrt{\alpha_t} \bm{x}_0, \sqrt{1 - \alpha_t} I),
\end{equation*}
where \( \alpha_t \) represents a time-dependent noise schedule. Over time, this transformation converts the data distribution into a Gaussian distribution as \( t \to T \).

Regarding the reverse process, given two time steps $s$ and $t$, where $s > 0$ and $t < s$, $x_t$ is calculated as~\citep{lu2022dpm}:
\begin{equation}\label{eq:int}
    \bm{x}_t=\frac{\alpha_t}{\alpha_s} \bm{x}_s-\alpha_t \int_{\lambda_s}^{\lambda_t} e^{-\lambda} \hat{\bm{\epsilon}}_\theta\left(\bm{x}_{t_\lambda(\lambda)}, t_\lambda(\lambda) \right) \mathrm{d} \lambda,
\end{equation}
where $\lambda_t :=\log \left(\alpha_t / \sigma_t\right)$, $t_\lambda(\lambda)$ is the inverse function of $\lambda_t$ satisfying  $t_\lambda(\lambda_t) = t$ and $\bm{\epsilon}_\theta\left(\cdot\right)$ often represents the learned model, which, in our case, is the diffusion transformer. Previous methods show that this integral term can be approximated by adopting Taylor expansion at $\lambda_s$, adopting the first-order or higher-order approximation~\citep{lu2022dpm}. Take the first-order one as an example, the update of $\bm{x}_t$ would be:
\begin{equation}  \label{eq:first-order}
    \bm{x}_{t}=\frac{\alpha_{t}}{\alpha_{s}} \bm{x}_{s}-\sigma_{t}\left(e^{\lambda_{t}-\lambda_{s}}-1\right) \boldsymbol{\epsilon}_\theta\left(\bm{x}_s, s\right).
\end{equation}

\textbf{Adams-Bashforth Method.}
The Adams-Bashforth method ~\citep{butcher2016numerical,hairer1993sp} is a class of explicit linear multistep numerical techniques designed for solving ODEs. Introduced in the 19th century by John Couch Adams and Francis Bashforth, this method employs an iterative extrapolation strategy that leverages prior solution points to predict future values. It is particularly effective for non-stiff ODEs, offering superior accuracy compared to single-step methods such as Euler's approach, while significantly reducing computational overhead by minimizing redundant evaluations. A key advantage of the Adams-Bashforth method is its reliance on zero-order information—specifically, network outputs—without requiring higher-order derivatives or Taylor series expansions. This characteristic makes it both computationally efficient and straightforward to integrate into standard caching frameworks. In practice, lower-order variants of the method are predominantly used due to their simplicity and sufficient accuracy for most applications. As such, the subsequent discussion will focus primarily on these lower-order implementations, which are well-documented and widely referenced in numerical analysis literature.

\subsection{Motivation: Rethinking Similarity in Diffusion Denoising Process}
The denoising process in diffusion models is inherently iterative, requiring multiple steps to transform noisy inputs into high-quality outputs. A key observation in recent works~\citep{ma2024deepcache,zhao2024real} is the high similarity between intermediate computations across adjacent denoising steps, which has been empirically exploited to accelerate inference through caching mechanisms.  Typically, this similarity is characterized by metrics that measure absolute differences, such as the relative $L_p$ norm distance between feature vectors. So, it is natural to ask about the relationship between  
\(\epsilon_{\theta}(\bm{x}_t, t)\) and \(\epsilon_{\theta}(\bm{x}_s, s)\) for adjacent time steps 
\(t\) and 
\(s\). Why do they perform so similarly?
In this section, we provide a theoretical analysis to explore their relationship. 
Before presenting our conclusions, we conducted two preliminary experiments to validate the similarity between adjacent steps: one focusing on measuring value-based similarity, and the other on assessing directional similarity.

Our experiments reveal a more nuanced phenomenon: the similarity between steps is not only reflected in the magnitude of feature values but also in their directional alignment, as illustrated in Fig. \ref{fig:compare}. This directional similarity directly challenges the assumption that simply reusing cached computations from adjacent steps is an optimal strategy, as it ignores the structured evolution of features along a consistent trajectory.  

To address this gap, we present a comprehensive theoretical analysis that treats the entire network as a unified entity, independent of specific architectural choices such as U-Net-based or DiT-based structures. This model-agnostic approach allows our analysis to generalize across diverse diffusion models, providing insights into the fundamental mechanisms driving the similarity in network outputs at adjacent time steps.

  \begin{figure}[htbp]
      \centering
      \hspace{0em}
      \begin{minipage}[t]{0.49\textwidth}
      \centering
      \includegraphics[width=18em]{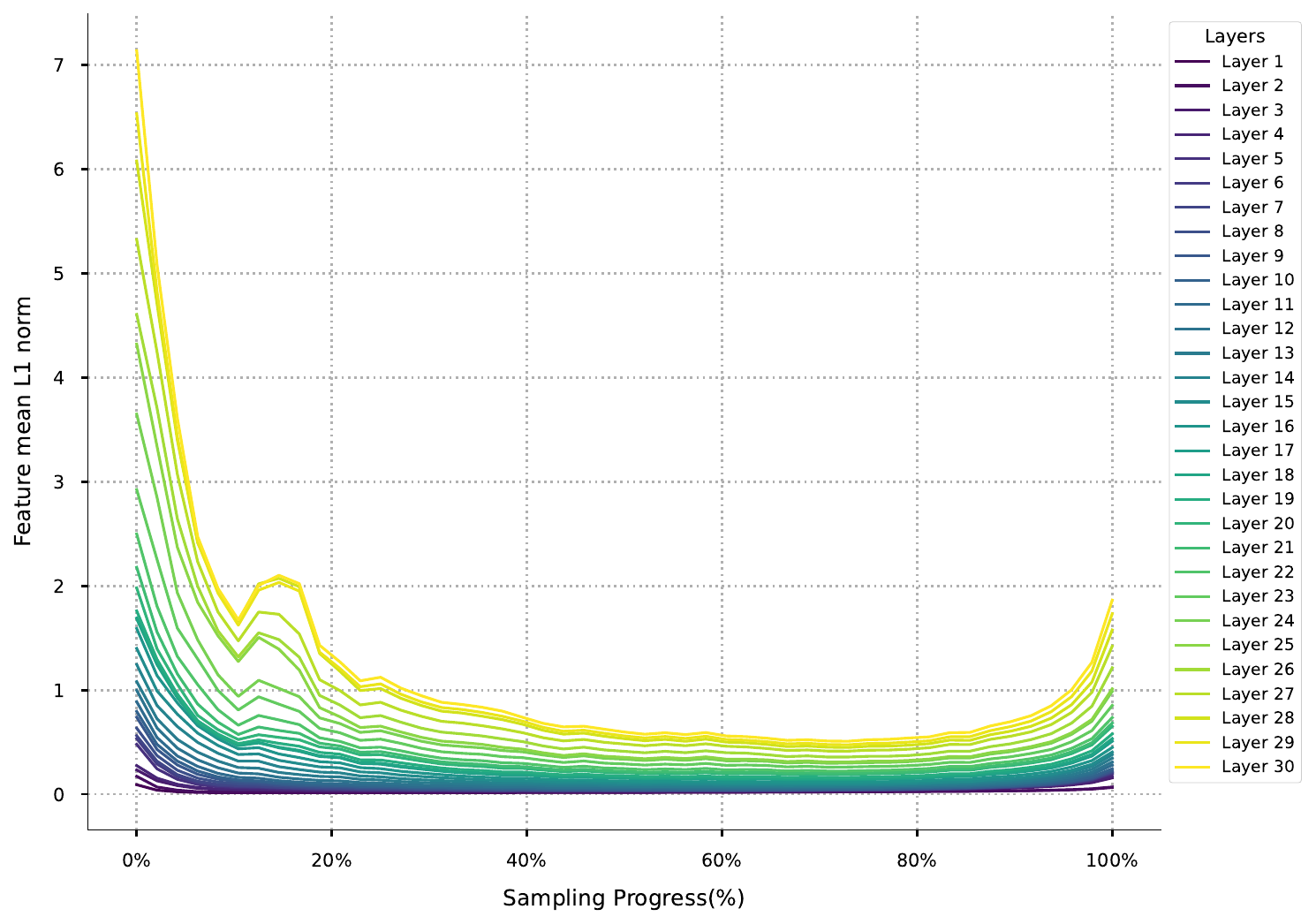}
      \caption{U-type similarity curve.}
      \label{fig:subfig1}
      \end{minipage}
      \begin{minipage}[t]{0.49\textwidth}
      \centering
      \includegraphics[width=18em]{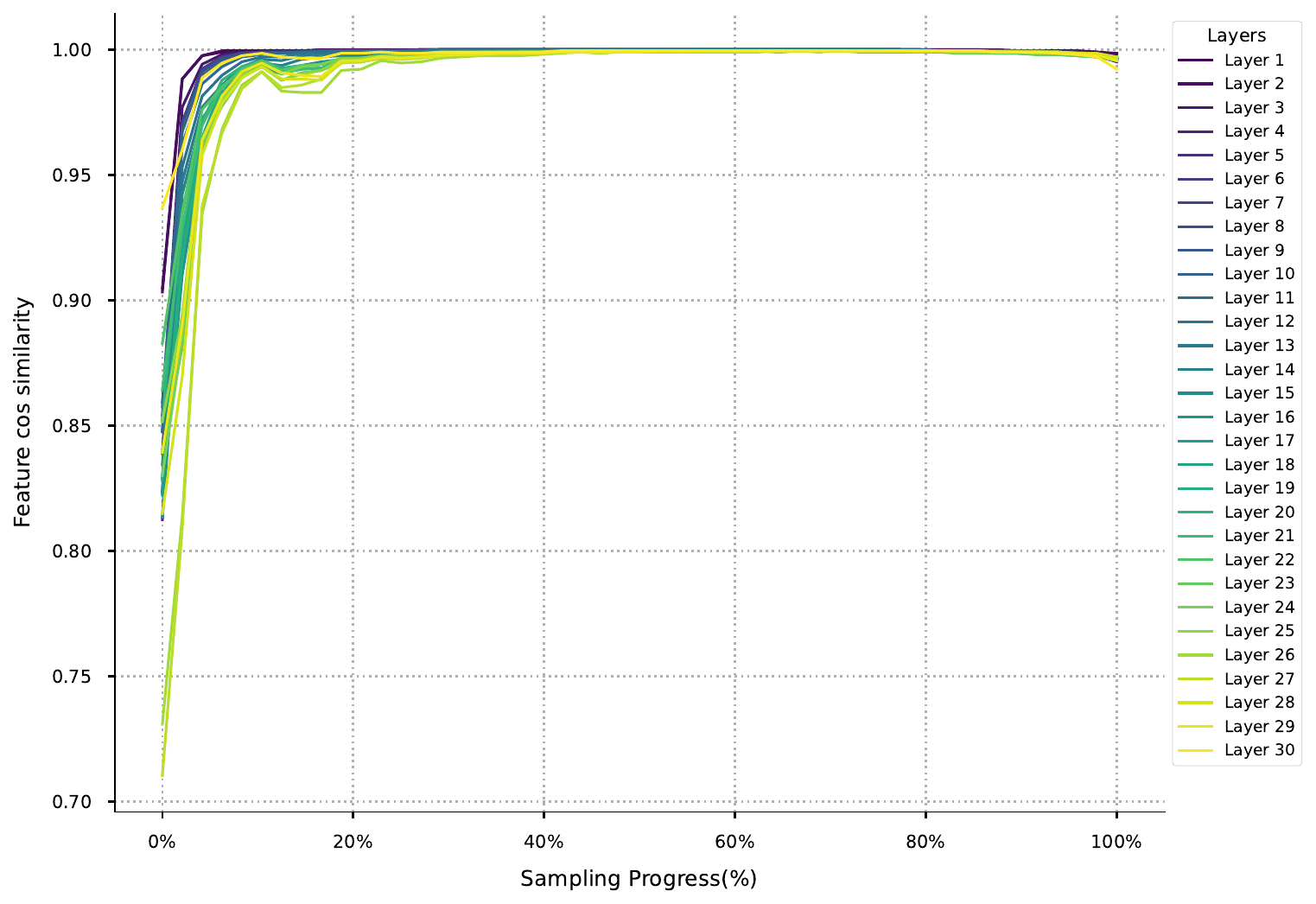}
      \caption{Directional consistency measured by cosine distance.}
      \label{fig:subfig2}
      \end{minipage}
      \caption{Analysis of U-type curve and directional consistency. Fig.\ref{fig:subfig1}: U-type similarity curve, demonstrating high similarity between feature maps of adjacent steps; Fig.\ref{fig:subfig2}: Directional consistency measured by cosine distance, validating the alignment of feature vectors.}
    \label{fig:compare}
  \end{figure}



\vspace{-5mm}
\subsection{U-Shaped Denoising Dynamics: An Adams-Bashforth Analysis of Diffusion Processes}
Diffusion models learn to generate data by iteratively denoising samples through a reverse process, which is mathematically described as a reverse-time SDE. This reverse SDE is associated with a deterministic Probability Flow Ordinary Differential Equation, whose exact solution, as shown in Eq. \eqref{eq:int}, can be derived analytically.
In this section, we leverage the Adams-Bashforth method to derive the relationship between the computational results at adjacent timesteps, effectively approximating future evaluations based on historical computations and further optimizing the inference efficiency. The integral of interest is defined as:
\begin{equation*}
    I = \int_{\lambda_s}^{\lambda_t} e^{-\lambda} \hat{\boldsymbol{\epsilon}}_\theta\left(\boldsymbol{x}_{t_\lambda(\lambda)}, t_\lambda(\lambda)\right) \mathrm{d} \lambda.
\end{equation*}
In order to approximate $I$ numerically, heuristically, we take the second-order Adams-Bashforth ($\operatorname{AB2}$) scheme to compute the vector-valued integral. Actually, we can get the following approximate relationship.
\begin{proposition}
     Let $h = \lambda_t - \lambda_s = \lambda_s -\lambda_o$ be step size, with some mild conditions the same as those in ~\citep{lu2022dpm}, we can obtain:
    \begin{equation} \label{eq:scale}
        \hat{\boldsymbol{\epsilon}}_\theta\left(\boldsymbol{x}_{t_\lambda(\lambda_s)}, t_\lambda(\lambda_s)\right)  =\frac{\alpha_t h}{3e^{-h}\alpha_t h -   2\sigma_{\lambda_t} e^{\lambda_o}(e^h-1) } \hat{\boldsymbol{\epsilon}}_\theta\left(\boldsymbol{x}_{t_\lambda(\lambda_o)}, t_\lambda(\lambda_o)\right) + O(h^2).
    \end{equation}
\end{proposition}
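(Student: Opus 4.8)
The plan is to write the exact integral $I$ of \eqref{eq:int} in two independent ways and match them, then solve the resulting scalar identity for $\hat{\boldsymbol{\epsilon}}_\theta$ at $\lambda_s$. For brevity put $\hat{\boldsymbol{\epsilon}}_s := \hat{\boldsymbol{\epsilon}}_\theta(\boldsymbol{x}_{t_\lambda(\lambda_s)}, t_\lambda(\lambda_s))$ and $\hat{\boldsymbol{\epsilon}}_o := \hat{\boldsymbol{\epsilon}}_\theta(\boldsymbol{x}_{t_\lambda(\lambda_o)}, t_\lambda(\lambda_o))$, and recall the schedule identities $\sigma_t = \alpha_t e^{-\lambda_t}$ and $\lambda_t - \lambda_s = \lambda_s - \lambda_o = h$.

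First I would extract a first-order expression for $I$: approximating $\hat{\boldsymbol{\epsilon}}_\theta$ by its value at $\lambda_s$ and integrating $e^{-\lambda}$ exactly over $[\lambda_s,\lambda_t]$ — the approximation underlying \eqref{eq:first-order} — gives
\[
  I \;=\; \bigl(e^{-\lambda_s} - e^{-\lambda_t}\bigr)\hat{\boldsymbol{\epsilon}}_s + O(h^2) \;=\; \tfrac{\sigma_t}{\alpha_t}\bigl(e^{h}-1\bigr)\hat{\boldsymbol{\epsilon}}_s + O(h^2),
\]
where the $O(h^2)$ remainder is controlled by the ``mild conditions'' of~\citep{lu2022dpm}, namely boundedness of the first derivatives of $\lambda\mapsto\hat{\boldsymbol{\epsilon}}_\theta(\boldsymbol{x}_{t_\lambda(\lambda)}, t_\lambda(\lambda))$ along the trajectory. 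Next I would apply the $\operatorname{AB2}$ quadrature to the vector-valued integrand $g(\lambda) = e^{-\lambda}\hat{\boldsymbol{\epsilon}}_\theta(\boldsymbol{x}_{t_\lambda(\lambda)}, t_\lambda(\lambda))$ over $[\lambda_s,\lambda_t]$, using the two equally spaced backward nodes $\lambda_o$ and $\lambda_s$; this is the linear-extrapolation rule $\tfrac{3h}{2}g(\lambda_s) - \tfrac{h}{2}g(\lambda_o)$, whose local truncation error is $O(h^3)$ (again by the same smoothness hypothesis), so
\[
  I \;=\; \tfrac{3h}{2}\,e^{-\lambda_s}\hat{\boldsymbol{\epsilon}}_s \;-\; \tfrac{h}{2}\,e^{-\lambda_o}\hat{\boldsymbol{\epsilon}}_o \;+\; O(h^3).
\]

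Equating the two expressions for $I$ and collecting the terms that carry $\hat{\boldsymbol{\epsilon}}_s$ on the left gives the scalar linear identity
\[
  \Bigl(\tfrac{3}{2}\,h\,e^{-\lambda_s} - \tfrac{\sigma_t}{\alpha_t}(e^{h}-1)\Bigr)\hat{\boldsymbol{\epsilon}}_s \;=\; \tfrac{h}{2}\,e^{-\lambda_o}\hat{\boldsymbol{\epsilon}}_o + O(h^2).
\]
Solving for $\hat{\boldsymbol{\epsilon}}_s$, then multiplying numerator and denominator of the resulting coefficient by $2\alpha_t e^{\lambda_o}$ and using $e^{-\lambda_s}e^{\lambda_o} = e^{-h}$ (and writing $\sigma_t = \sigma_{\lambda_t}$), clears the $e^{-\lambda_o}$ from the numerator and reproduces exactly the coefficient of $\hat{\boldsymbol{\epsilon}}_o$ in \eqref{eq:scale}, up to the remainder term.

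The step I expect to be the main obstacle is the error bookkeeping in this final solve. The denominator $3e^{-h}\alpha_t h - 2\sigma_{\lambda_t}e^{\lambda_o}(e^{h}-1)$ is not bounded away from zero: a Taylor expansion shows its leading term is $(3\alpha_t - 2\sigma_{\lambda_t}e^{\lambda_t})\,h = \alpha_t h$, since $\sigma_{\lambda_t}e^{\lambda_t} = \alpha_t$, so it is only of order $h$. Consequently, dividing the $O(h^2)$ residual by it has to be justified with care — either by keeping the next Taylor term of the first-order expansion explicit and exhibiting the cancellation against the $\operatorname{AB2}$ remainder, or by invoking a uniform lower bound $\bigl| 3e^{-h}\alpha_t h - 2\sigma_{\lambda_t}e^{\lambda_o}(e^{h}-1) \bigr| \ge c\,h$ for small $h$ under a non-degenerate noise schedule. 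This is exactly the purpose of the regularity hypotheses imported from DPM-Solver, and it is what pins down the order of the remainder in \eqref{eq:scale}; the rest is elementary algebra with the identities among $\alpha_t$, $\sigma_t$, and $\lambda_t$.
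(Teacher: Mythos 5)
Your derivation follows the paper's proof essentially verbatim: approximate $I$ once by the first-order (single-node) rule underlying \eqref{eq:first-order} and once by the two-node $\operatorname{AB2}$ quadrature \eqref{eq:second-order}, equate the two, and solve the resulting linear identity for $\hat{\boldsymbol{\epsilon}}_\theta(\boldsymbol{x}_{t_\lambda(\lambda_s)}, t_\lambda(\lambda_s))$; your algebra for the coefficient (multiplying through by $2\alpha_t e^{\lambda_o}$ and using $e^{-\lambda_s}e^{\lambda_o}=e^{-h}$, $\sigma_t=\alpha_t e^{-\lambda_t}$) reproduces \eqref{eq:scale} exactly.

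The obstacle you flag at the end, however, is not a technicality to be handled ``with care'' --- it is a genuine gap, and neither of your two proposed fixes closes it. The coefficient of $\hat{\boldsymbol{\epsilon}}_s$ after collecting terms is $e^{-\lambda_s}\bigl(\tfrac{3h}{2}-1+e^{-h}\bigr)=e^{-\lambda_s}\bigl(\tfrac{h}{2}+\tfrac{h^2}{2}+O(h^3)\bigr)$, so a uniform lower bound of the form $c\,h$ is the best available, and dividing an $O(h^2)$ residual by it yields only $O(h)$, not $O(h^2)$. Nor is there a cancellation to exhibit: the first-order rule's residual is genuinely $\tfrac{h^2}{2}e^{-\lambda_s}\,\partial_\lambda\hat{\boldsymbol{\epsilon}}_\theta(\lambda_s)+O(h^3)$ while the $\operatorname{AB2}$ residual is $O(h^3)$, so after the division the remainder in \eqref{eq:scale} is $h\,\partial_\lambda\hat{\boldsymbol{\epsilon}}_\theta(\lambda_s)+O(h^2)$, which is $\Theta(h)$ for a generic trajectory. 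A direct sanity check confirms this: the coefficient $r$ in \eqref{eq:scale} Taylor-expands to $1+O(h^2)$, whereas $\hat{\boldsymbol{\epsilon}}_s-\hat{\boldsymbol{\epsilon}}_o$ is $\Theta(h)$ for any smooth non-constant integrand, so $\hat{\boldsymbol{\epsilon}}_s-r\hat{\boldsymbol{\epsilon}}_o$ cannot be $O(h^2)$. What this argument actually establishes is the linear relation with remainder $O(h)$ at the level of the $\hat{\boldsymbol{\epsilon}}$'s (equivalently, $O(h^2)$ at the level of the integral $I$, i.e.\ before dividing by the $\Theta(h)$ coefficient). You should be aware that the paper's own proof performs the same silent division and therefore shares this gap; your write-up is at least more honest in isolating exactly where the order claim breaks down, but as proposed it does not prove the $O(h^2)$ bound in the statement.
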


\begin{proof}
Let \(\boldsymbol{x}_{t}\) be a function satisfying the integral equation,
\begin{equation} \label{ode}
    \boldsymbol{x}_t = \frac{\alpha_t}{\alpha_s} \boldsymbol{x}_s - \alpha_t \int_{\lambda_s}^{\lambda_t} e^{-\lambda} \hat{\boldsymbol{\epsilon}}_\theta\left(\boldsymbol{x}_{t_\lambda(\lambda)}, t_\lambda(\lambda)\right) \mathrm{d} \lambda,  
\end{equation}
where \(\boldsymbol{x}_t, \boldsymbol{x}_s \in \mathbb{R}^n\) are vector-valued functions and \(\hat{\boldsymbol{\epsilon}}_\theta: \mathbb{R}^n \times \mathbb{R} \to \mathbb{R}^n\) is a twice continuously differentiable function parameterized by \(\theta\). We also assume \(t_\lambda(\lambda)\) is a continuous mapping from \(\lambda\) to time and $\lambda_t$ is continuously differentiable twice. With  Lagrange linear interpolation, we can approximate the integrand in equality \eqref{ode} with the following linear function
\begin{equation*}
    p(u) := (u+1)e^{-\lambda_s} \hat{\boldsymbol{\epsilon}}_\theta\left(\boldsymbol{x}_{t_\lambda(\lambda_s)}, t_\lambda(\lambda_s)\right) - u e^{-\lambda_o} \hat{\boldsymbol{\epsilon}}_\theta\left(\boldsymbol{x}_{t_\lambda(\lambda_o)}, t_\lambda(\lambda_o)\right),
\end{equation*}
where
\begin{equation*}
    u = \frac{\lambda-\lambda_s}{h} \in [0,1],
\end{equation*}
hence we are able to approximate $I$ with the value of $h \int_{0}^{1} p(u) \mathrm{d} u$, thus
\begin{equation} \label{eq:second-order}
     I = h\left(\frac{3}{2}e^{-\lambda_s} \hat{\boldsymbol{\epsilon}}_\theta\left(\boldsymbol{x}_{t_\lambda(\lambda_s)}, t_\lambda(\lambda_s)\right) - \frac{1}{2}e^{-\lambda_o} \hat{\boldsymbol{\epsilon}}_\theta\left(\boldsymbol{x}_{t_\lambda(\lambda_o)}, t_\lambda(\lambda_o)\right) \right) + O(h^3).
\end{equation}
With sufficiently small step size $h$, the integral $I$ is approximated with $O(h^2)$ global error. On the other hand, according to Equation \eqref{eq:first-order}, we observe that: 
\begin{equation*}
    \alpha_t I = \sigma_{\lambda_t}(e^h-1)\hat{\boldsymbol{\epsilon}}_\theta\left(\boldsymbol{x}_{t_\lambda(\lambda_s)}, t_\lambda(\lambda_s)\right) + O(h^2),
\end{equation*}
combining with Eq. \eqref{eq:second-order}, we can demonstrate the linear relationship of $\hat{\boldsymbol{\epsilon}}_\theta\left(\boldsymbol{x}_{t_\lambda(\lambda_o)}, t_\lambda(\lambda_o)\right)$ and $\hat{\boldsymbol{\epsilon}}_\theta\left(\boldsymbol{x}_{t_\lambda(\lambda_s)}, t_\lambda(\lambda_s)\right)$ as Equation \eqref{eq:scale} shows.
\end{proof}
Based on the above proposition, we find that when the truncation error is \(O(h^2)\), the network outputs of the diffusion model at adjacent time steps differ by a scaling factor, satisfying the linear relationship \eqref{eq:scale}.
The relationship between the network outputs at consecutive time steps can be expressed as \textit{coefficient}  \(r = \frac{\alpha_t h}{3e^{-h}\alpha_t h -   2\sigma_{\lambda_t} e^{\lambda_o}(e^h-1) } \), within the tolerance of second-order error. In other words, given the initial values, $r$ determines the trend of changes in the network outputs between adjacent time steps. 
The variation curve of the scale factor is shown in Fig. \ref{fig:scale}. As can be seen from the curve in the figure, the coefficient is around $1$ during the middle period, while it deviates from $1$ at the beginning and the end. This pattern aligns with the U-shaped curve commonly observed in~\citep{lv2024fastercache,ma2024deepcache}on caching-based accelerating diffusion model inference.

\begin{figure}[htbp]
    \centering
    \includegraphics[width=\textwidth]{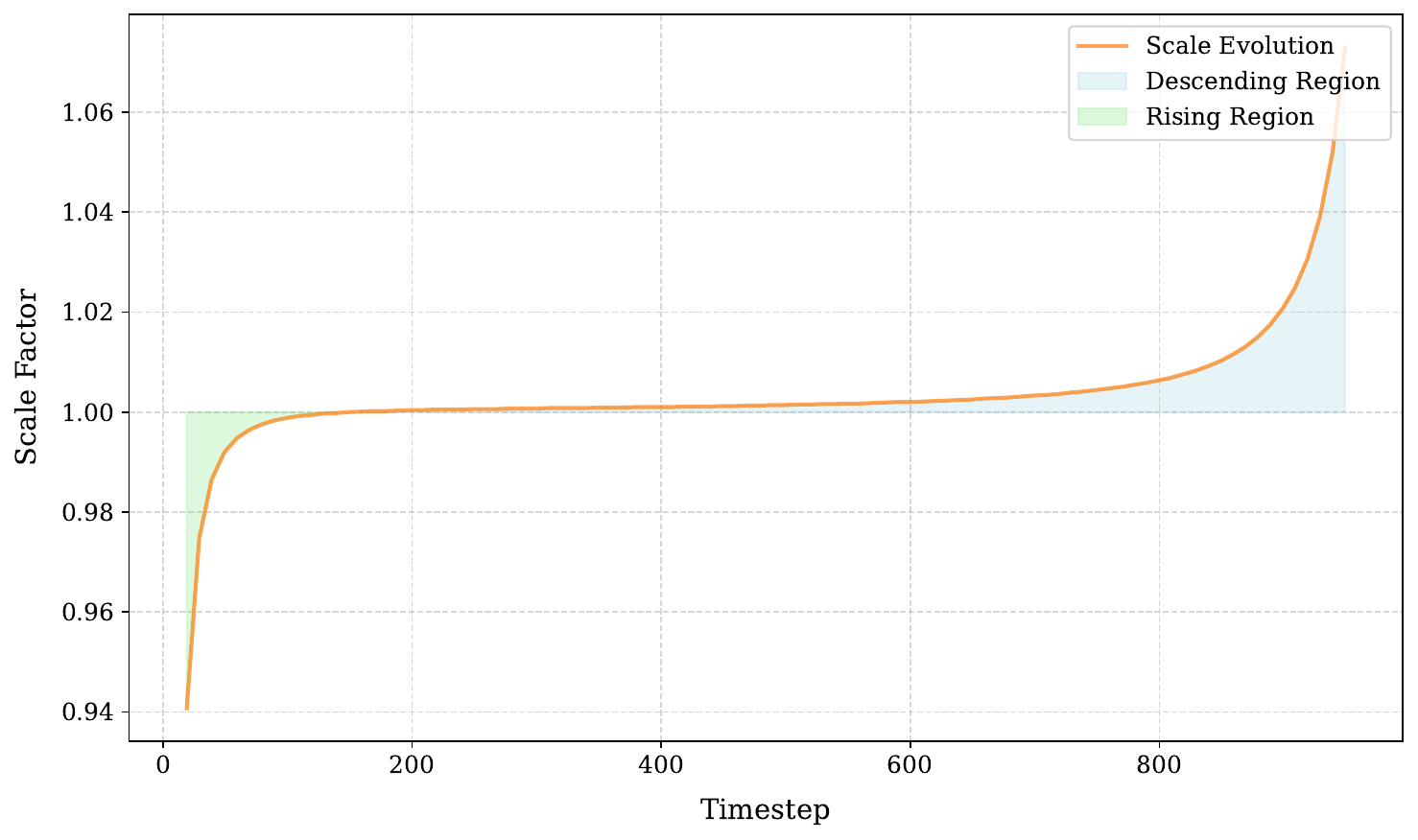} 
    \caption{Scale Factor Variation Curve with Timestep During the Diffusion Model Inference Process} 
    \label{fig:scale} 
\end{figure}

\subsection{AB-Cache: High-Order Linear Approximation Caching for Accelerated Adams-Bashforth Schemes} 


In this subsection, we introduce an innovative training-free caching acceleration method designed to minimize prediction inaccuracies caused by caching. By leveraging network output information from the preceding \( k \) time steps, this approach extends beyond traditional single-step caching strategies. The method's workflow is illustrated in Fig. \ref{fig:pipeline}. Unlike prior techniques that reuse only the immediately preceding time step's network output, we generalize the framework to incorporate information from the preceding \( k \) time steps while maintaining a truncation error bound of order \( O(h^k) \). To achieve this, we combine the \( k \)-th and \( (k+1) \)-th order Adams-Bashforth methods, leading to Proposition \ref{pro2}, which encapsulates the core innovation of our approach. By embedding this concept into a standard caching methodology, we derive a comprehensive workflow that balances computational efficiency with predictive accuracy, enabling more robust error control while retaining the efficiency benefits of caching.

\begin{proposition}\label{pro2}
    For $k=2,3,4$, we  can show the following $k$ linear recursive relationship of $\hat{\boldsymbol{\epsilon}}_\theta\left(\boldsymbol{x}_{t_\lambda(\lambda_{n})}, t_\lambda(\lambda_{n})\right)$ and $\hat{\boldsymbol{\epsilon}}_\theta\left(\boldsymbol{x}_{t_\lambda(\lambda_{n-i})}, t_\lambda(\lambda_{n-i})\right)$ where $1 \leq i \leq k$:  
    \begin{equation}    \label{k-recursive}            
        \hat{\boldsymbol{\epsilon}}_\theta\left(\boldsymbol{x}_{t_\lambda(\lambda_n)}, t_\lambda(\lambda_n)\right) = \sum_{i=1}^k (-1)^{i+1}\binom{k}{i}  e^{ih}\hat{\boldsymbol{\epsilon}}_\theta\left(\boldsymbol{x}_{t_\lambda(\lambda_{n-i})}, t_\lambda(\lambda_{n-i})\right) + O(h^k).
    \end{equation}
\end{proposition}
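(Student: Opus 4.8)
The plan is to exploit the fact that the $k$-th order Adams-Bashforth formula for the integral $I$ is obtained by integrating the degree-$(k-1)$ Lagrange interpolating polynomial through the $k$ most recent nodes, and that two successive Adams-Bashforth formulas (of order $k$ and $k+1$) must approximate the \emph{same} integral, so their difference is controlled by the truncation error. Concretely, write $g(\lambda) := e^{-\lambda}\hat{\boldsymbol{\epsilon}}_\theta(\boldsymbol{x}_{t_\lambda(\lambda)}, t_\lambda(\lambda))$ so that $I = \int_{\lambda_{n-1}}^{\lambda_n} g(\lambda)\,\mathrm{d}\lambda$ on a uniform grid with spacing $h$. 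The order-$k$ AB formula gives $I = h\sum_{j=0}^{k-1} b_j\, g(\lambda_{n-1-j}) + O(h^{k+1})$ for the classical Adams-Bashforth weights $b_j$, and the order-$(k+1)$ formula gives $I = h\sum_{j=0}^{k} \tilde b_j\, g(\lambda_{n-1-j}) + O(h^{k+2})$. Subtracting, $h\sum_{j=0}^{k-1}(b_j-\tilde b_j) g(\lambda_{n-1-j}) - h\tilde b_k\, g(\lambda_{n-1-k}) = O(h^{k+1})$, and since the leading factor is $h$, after dividing by $h$ we obtain a linear relation among $g(\lambda_{n-1}),\dots,g(\lambda_{n-1-k})$ valid up to $O(h^{k})$.

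The next step is to translate this relation among the $g$-values into the claimed relation among the $\hat{\boldsymbol{\epsilon}}_\theta$-values. Since $g(\lambda_{n-i}) = e^{-\lambda_{n-i}}\hat{\boldsymbol{\epsilon}}_\theta(\boldsymbol{x}_{t_\lambda(\lambda_{n-i})}, t_\lambda(\lambda_{n-i}))$ and the grid is uniform, $e^{-\lambda_{n-i}} = e^{-\lambda_n} e^{ih}$; dividing the whole relation by $e^{-\lambda_n}$ converts every $g(\lambda_{n-i})$ into $e^{ih}\hat{\boldsymbol{\epsilon}}_\theta(\cdots)$, which is exactly the $e^{ih}$ factor appearing in \eqref{k-recursive}. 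It then remains to check that the specific combination of Adams-Bashforth weights collapses to the binomial coefficients $(-1)^{i+1}\binom{k}{i}$. This is the combinatorial heart of the argument: one recognizes that the difference of the order-$k$ and order-$(k+1)$ Adams-Bashforth predictors, normalized appropriately, reproduces the finite-difference identity $\sum_{i=0}^{k}(-1)^i\binom{k}{i} g(\lambda_{n-i}) = h^k g^{(k)}(\xi) + \cdots = O(h^k)$ after multiplication by $h^{-k}\cdot h^k$-type bookkeeping; equivalently, the $k$-th forward difference of a smooth function vanishes to order $h^k$, which is precisely a rewriting of the statement that interpolation of degree $k-1$ through $k$ points has error $O(h^k)$ when evaluated one node beyond. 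I would verify the coefficient identity directly for $k=2,3,4$ (as the proposition only claims these cases), matching against the worked $k=2$ instance in \eqref{eq:second-order}.

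The main obstacle I anticipate is making the error bookkeeping honest: the naive ``difference of two AB formulas'' argument produces a relation with $O(h^{k+1})$ on the right, and one must track that the relation being asserted is the one obtained \emph{after dividing by $h$}, so that the stated error is $O(h^{k})$ rather than $O(h^{k+1})$ — and simultaneously confirm that no cancellation improves it to a higher order (it does not, because the $k$-th derivative term survives). A cleaner route that sidesteps juggling two sets of Adams-Bashforth weights is to argue \emph{directly} from finite differences: for $g \in C^{k}$, Newton's forward-difference formula gives $\Delta^k g(\lambda_{n-k}) = h^k g^{(k)}(\xi)$ for some $\xi$, i.e. $\sum_{i=0}^{k}(-1)^{i}\binom{k}{i} g(\lambda_{n-i}) = O(h^{k})$; isolating the $i=0$ term yields $g(\lambda_n) = \sum_{i=1}^{k}(-1)^{i+1}\binom{k}{i} g(\lambda_{n-i}) + O(h^{k})$, and multiplying through by $e^{\lambda_n}$ gives \eqref{k-recursive} verbatim. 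Under this route the only things to check are the smoothness hypothesis on $g$ (inherited from the twice-continuously-differentiable assumption on $\hat{\boldsymbol{\epsilon}}_\theta$ together with differentiability of $t_\lambda$, extended as needed for $k=3,4$) and the restriction to a uniform $\lambda$-grid, which matches the setup $h=\lambda_n-\lambda_{n-1}=\cdots$ already used in Proposition~1; I would state these explicitly and then present the finite-difference computation as the short proof, relegating the Adams-Bashforth interpretation to a remark explaining why this is the ``$k$-th minus $(k+1)$-th order'' scheme advertised in the text.
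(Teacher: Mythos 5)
Your proposal is correct, and it actually contains two proofs: the first half reproduces the paper's argument essentially verbatim (approximate the same integral $I$ with the order-$k$ and order-$(k+1)$ Adams--Bashforth predictors, subtract, divide by $h$, and check case by case that the resulting weight combination collapses to signed binomial coefficients), while your ``cleaner route'' via Newton's forward differences is a genuinely different and, in my view, better argument. Writing $g(\lambda)=e^{-\lambda}\hat{\boldsymbol{\epsilon}}_\theta(\boldsymbol{x}_{t_\lambda(\lambda)},t_\lambda(\lambda))$ and invoking $\Delta^k g = h^k g^{(k)}(\xi)$ gives $\sum_{i=0}^{k}(-1)^i\binom{k}{i}g(\lambda_{n-i})=O(h^k)$ in one line, and multiplying by $e^{\lambda_n}$ (harmless, since $e^{\lambda_n-\lambda_{n-i}}=e^{ih}=O(1)$) yields \eqref{k-recursive} directly. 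This buys you three things the paper's proof does not: (i) it proves the identity for \emph{all} $k$, not just $k=2,3,4$, thereby settling the conjecture in the paper's Remark; (ii) it explains \emph{why} the binomial coefficients appear, rather than discovering them by tabulated-weight arithmetic (indeed the AB$_{k+1}$ and AB$_k$ predictors differ by exactly $h\gamma_k\nabla^k g_n$, so the two routes are the same identity in disguise); and (iii) it avoids the paper's somewhat delicate weight bookkeeping, where the displayed combined formula even carries a stray index. Your error accounting (difference of predictors is $O(h^{k+1})$, division by $h$ times a nonzero constant gives $O(h^k)$) is the right way to be honest about the order. The one point you should state explicitly rather than wave at: the finite-difference bound needs $g\in C^k$, i.e.\ $k$-fold differentiability of $\hat{\boldsymbol{\epsilon}}_\theta$ along the trajectory, which for $k=3,4$ strictly exceeds the $C^2$ hypothesis carried over from Proposition~1; the paper has the same gap, but since you are making the general-$k$ claim rigorous you should add the hypothesis.
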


\begin{proof}
    According to the general $k$-th order Adams-Bashforth scheme in Appendix \ref{adams}, for the following integral,
    \begin{equation*}
        I = \int_{\lambda_{t_n}}^{\lambda_{t_{n+1}}} e^{-\lambda} \hat{\boldsymbol{\epsilon}}_\theta\left(\boldsymbol{x}_{t_\lambda(\lambda)}, t_\lambda(\lambda)\right) \mathrm{d} \lambda,
    \end{equation*}
    we respectively adopt $k$-th order and $(k+1)$-th order Adams-Bashforth scheme to approximate the value of $I$:
    \begin{equation} \label{k-appro}
         I = h\sum_{j=0}^{k-1} b^{(k)}_j e^{-\lambda_{n-j}}\hat{\boldsymbol{\epsilon}}_\theta\left(\boldsymbol{x}_{t_\lambda(\lambda_{n-j})}, t_\lambda(\lambda_{n-j})\right) + O(h^{k+1}) .
    \end{equation}
    \begin{equation} \label{k+1-appro}
         I = h\sum_{j=0}^{k} b^{(k+1)}_j e^{-\lambda_{n-j}}\hat{\boldsymbol{\epsilon}}_\theta\left(\boldsymbol{x}_{t_\lambda(\lambda_{n-j})}, t_\lambda(\lambda_{n-j})\right) + O(h^{k+2}) .
    \end{equation}
    where $b^{(k)}_j(0 \leq j \leq k-1)$ are the coefficients of $k$-th order scheme and $b^{(k+1)}_j(0 \leq j \leq k)$ are the coefficients of $(k+1)$-th order scheme. Combining the approximation relationship \eqref{k-appro} and \eqref{k+1-appro}, we can get the following linear recursive relationship for $k=1,2,3,4$ with the known weight coefficients in general Adams-Bashforth method
    \begin{equation*}
            \begin{aligned}
        e^{-\lambda_{n}}\hat{\boldsymbol{\epsilon}}_\theta\left(\boldsymbol{x}_{t_\lambda(\lambda_{n})}, t_\lambda(\lambda_{n})\right)=& \sum_{j=1}^{k-1} \frac{ b^{(k)}_j - b^{(k+1)}_j }{ b^{(k+1)}_0 - b^{(k)}_0 }  e^{-\lambda_{n-j}}\hat{\boldsymbol{\epsilon}}_\theta\left(\boldsymbol{x}_{t_\lambda(\lambda_{n-j})}, t_\lambda(\lambda_{n-j})\right)\\ 
        &- \frac{b^{(k+1)}_k}{ b^{(k)}_j - b^{(k+1)}_j }  e^{-\lambda_{n-k}}\hat{\boldsymbol{\epsilon}}_\theta\left(\boldsymbol{x}_{t_\lambda(\lambda_{n-k})}, t_\lambda(\lambda_{n-k})\right) + O(h^k).
    \end{aligned}
    \end{equation*}
    For $k=1,2,3,4$, it is easy to check the above relationship is the same as
    \begin{equation*}
        e^{-\lambda_n} \hat{\boldsymbol{\epsilon}}_\theta\left(\boldsymbol{x}_{t_\lambda(\lambda_n)}, t_\lambda(\lambda_n)\right) = \sum_{i=1}^k (-1)^{i+1}\binom{k}{i}  e^{-\lambda_{n-i}}\hat{\boldsymbol{\epsilon}}_\theta\left(\boldsymbol{x}_{t_\lambda(\lambda_{n-i})}, t_\lambda(\lambda_{n-i})\right) + O(h^k),
    \end{equation*}
    which is equivalent to the recursive relationship \eqref{k-recursive}.
\end{proof}
\begin{remark}
    For general \(k\), we conjecture that the linear recursive relationship \eqref{k-recursive} still holds for \(\hat{\boldsymbol{\epsilon}}_\theta\left(\boldsymbol{x}_{t_\lambda(\lambda_{n-i})}, t_\lambda(\lambda_{n-i})\right)\) where \(0 \leq i \leq k\). 
However, the general weight coefficients \(b^{(k)}_j\) (\(0 \leq j \leq k-1\)) depend on certain numerical integration (see Appendix A). While proving the general case is challenging, verifying specific instances is straightforward. In practice, we typically do not require higher-order approximations in experiments, so the formula in Proposition \ref{pro2} is already sufficient for practical purposes. For flow matching, just drop the exponential term in Eq. \eqref{k-recursive}.
\end{remark}

\section{Experiments}
\begin{figure*}[t] 
    \centering
    \includegraphics[width=\textwidth]{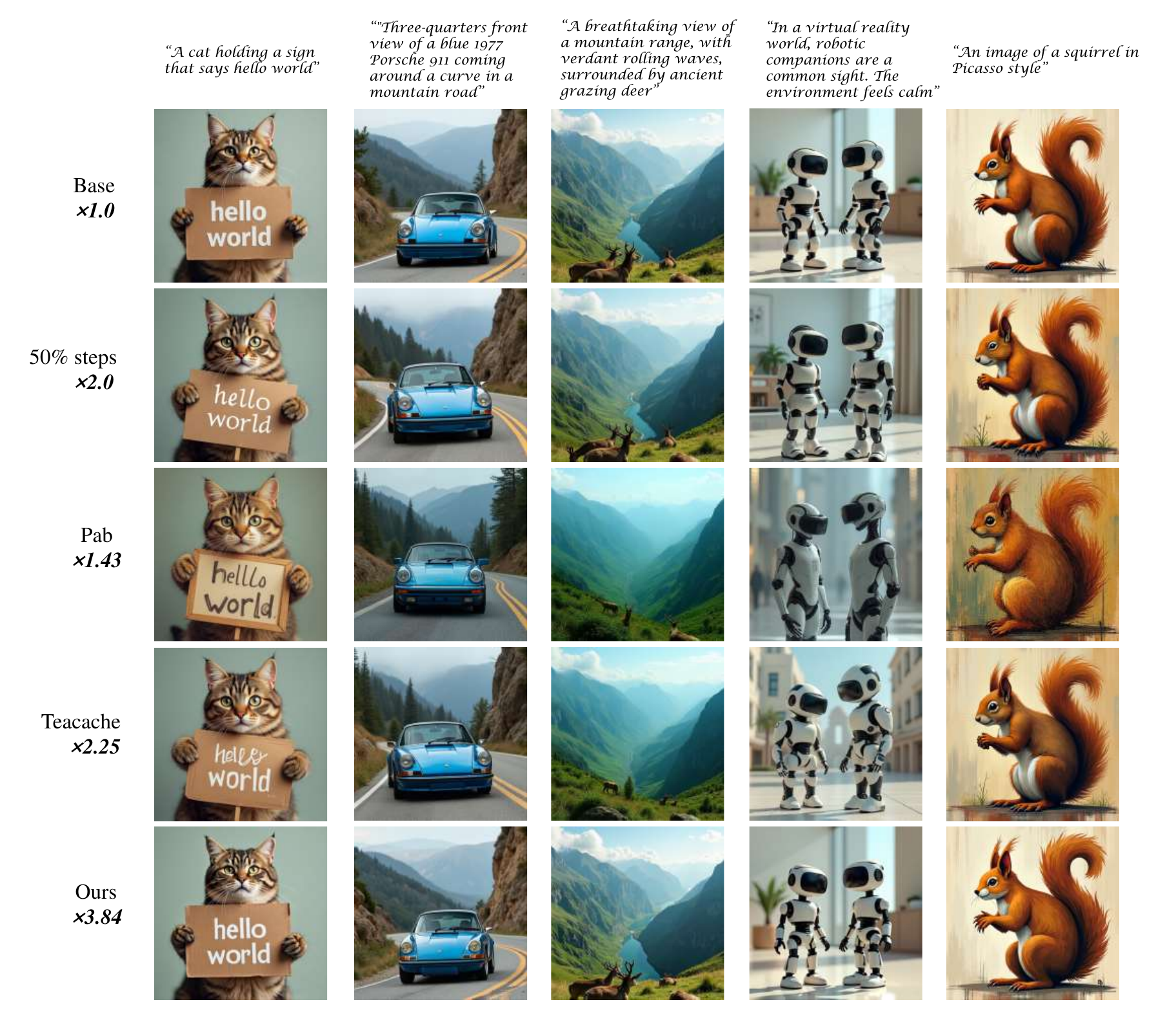}
    \caption{Visualization results for different acceleration methods on FLUX.1-dev model}
    \label{fig:image}
\end{figure*}
\subsection{Experimental Setup}
To comprehensively evaluate the effectiveness of our proposed acceleration method, we conduct extensive experiments across both image synthesis and video synthesis tasks, benchmarking against state-of-the-art models in each domain.

\textbf{Model Configurations.}
In our text-to-video generation study, we compare our approach with three state-of-the-art DiT-based model—HunyuanVideo~\citep{kong2024hunyuanvideo}, CogVideoX-5B~\citep{yang2024cogvideox}, and LTX-Video~\citep{hacohen2024ltx}. To ensure fair evaluation while respecting architectural differences, HunyuanVideo and LTX-Video employ the FlowMatchEulerDiscreteScheduler, whereas CogVideoX-5B utilizes the DDIM Scheduler. This configuration allows for systematic performance comparisons while accounting for each model's distinct scheduling mechanisms.
For text-to-image generation, we evaluate FLUX.1-dev using a 50-step FlowMatchEulerDiscreteScheduler as the default configuration.
All experiments are conducted with NVIDIA L40 48GB GPUs.



\begin{figure*}[t]  
    \centering
    \includegraphics[width=\textwidth]{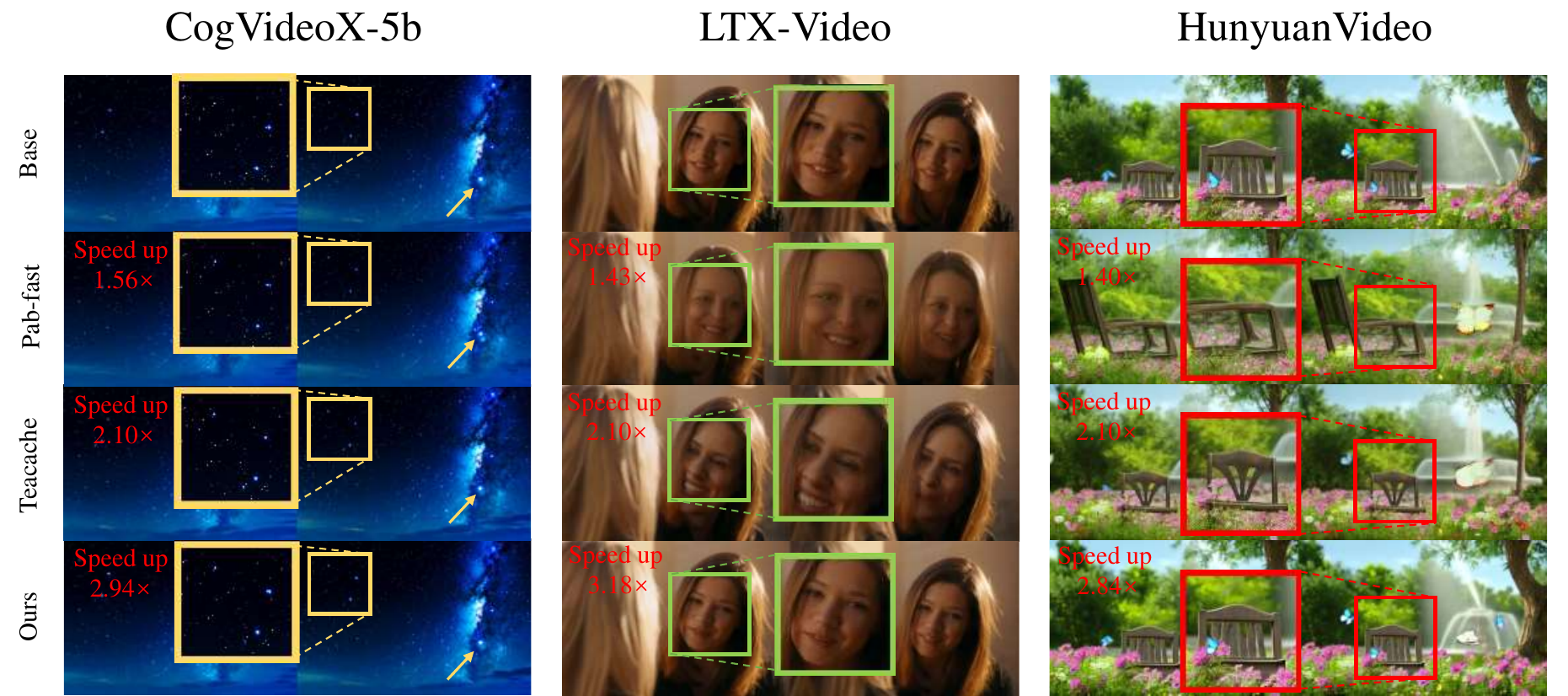}  
    \caption{Visual quality comparison between CogVideoX-5b, LTX and HunyuanVideo under different acceleration methods. 
maintaining high-quality generation without these problems.}  
    \label{fig:vc}
\end{figure*}

\begin{table*}[htbp]
\tiny
    \centering
    \caption{Quantitative evaluation of inference efficiency and visual quality in video generation models. AB-Cache consistently achieves superior efficiency and better visual quality across different base models, sampling schedulers, video resolutions, and lengths.}
    \label{tab: video}
    \begin{tabular}{c|ccc|cccc}
        \toprule
        \multirow{2}{*}{\textbf{Method}} & \multicolumn{3}{c|}{\textbf{Efficiency}} & \multicolumn{4}{c}{\textbf{Visual Quality}} \\ \cline{2-8}
         & \textbf{FLOPs (PFLOPs) $\downarrow$} & \textbf{Speedup $\uparrow$} & \textbf{Latency (s) $\downarrow$} & \textbf{VBench $\uparrow$} & \textbf{LPIPS $\downarrow$} & \textbf{SSIM $\uparrow$} & \textbf{PSNR $\uparrow$} \\

         \hline
        \hline
        \multicolumn{8}{c}{\textbf{LTX-Video} (48 frames, 704$\times$480)} \\
        \hline
        \rowcolor[gray]{0.9} LTX-Video $(T = 50)$ & 22.59  & 1$\times$ & 5.74 & 72.99\% & - & - & - \\
        $50\%$ steps & 11.30 & 2$\times$ & 2.87 & 72.86\% & 0.1985 & 0.8338 & 23.08 \\
        $\Delta$-DiT & 21.93 & 1.03$\times$ & 5.54 & 72.39\% & 0.6633 & 0.5448 & 12.68 \\
        PAB-slow & 16.49 & 1.37$\times$ & 4.19 & 72.42\% & 0.2885 & 0.7851 & 20.15 \\
        PAB-fast & 15.80 & 1.43$\times$ & 4.01 & 72.01\% & 0.3460 & 0.7445 & 18.92 \\
        Teacache & 10.76 & 2.10$\times$ & 2.73 & 72.51\% & 0.1166 & 0.8883 &  26.95 \\
        \hline
        AB-Cache (Order=3) & \textbf{7.10}  & \textbf{3.18 $\times$} & \textbf{1.80} & \textbf{73.26\%} & 0.1949 & 0.8321 & 22.91  \\
        AB-Cache (Order=2) & \textbf{7.10}  & \textbf{3.18 $\times$} & \textbf{1.80} & 73.18\% & \textbf{0.1286} & \textbf{0.8921} & \textbf{27.24}  \\
        \hline
        \hline
        \multicolumn{8}{c}{\textbf{HunyuanVideo} (61 frames, 320$\times$512)}  \\
        \hline
        \rowcolor[gray]{0.9} HunyuanVideo $(T = 50)$ & 61.32 & 1$\times$ & 150.04 & 79.37\% & - & - & - \\
        $50\%$ steps & 30.66 & 2.00$\times$ & 75.02 & 79.17\% & 0.1589 & 0.8077 & 24.11 \\
        $\Delta$-DiT & 59.53 & 1.03$\times$ & 145.67 & 77.21\% & 0.5692 & 0.4811 & 11.91 \\
        PAB-slow & 46.11 & 1.33$\times$ & 112.81  & 77.64\% & 0.2116 & 0.6923 &  20.98 \\
        PAB-fast & 43.80 & 1.40$\times$ & 107.17 & 76.95\% & 0.4094 & 0.5595 &  17.64 \\
        Teacache & 29.20 & 2.10$\times$ & 71.45 & 78.46\% & 0.2142 & 0.7622 &  21.78 \\
        \hline
        AB-Cache (Order=3) & \textbf{21.59} & \textbf{2.84$\times$} & \textbf{52.83} & 79.26\% & \textbf{0.1866} & \textbf{0.8251} & \textbf{25.78} \\

        \hline
        \hline
        \multicolumn{8}{c}{\textbf{CogVideoX-T2V-5B} (65 frames, 320$\times$512)} \\
        \hline
        \rowcolor[gray]{0.9} CogVideoX-T2V-5B $(T = 50)$ & 14.79 & 1$\times$ & 144.73  & 71.43\% & - & - & - \\
        $50\%$ steps & 7.40 & 2$\times$ & 72.37 & 71.22\% & 0.1583 & 0.8067 & 23.68 \\
        $\Delta$-DiT & 14.50 & 1.02$\times$ & 141.89 & 62.55\% & 0.5388 & 0.3736 & 13.85 \\
        PAB-slow & 10.88 & 1.36$\times$ & 106.42  &67.30\% & 0.3059 & 0.6550 &  18.80 \\
        PAB-fast & 9.48 & 1.56$\times$ & 92.78 & 66.81\% & 0.5499 & 0.4717 &  15.47 \\
        Teacache & 7.04 & 2.10$\times$ & 68.92 & 71.01\% & 0.1414 & 0.8269 & 24.18 \\
        
        \hline
        AB-Cache (Order=3) & \textbf{5.21} & \textbf{2.84$\times$} & \textbf{50.96} & 71.11\% & \textbf{0.0538} & \textbf{0.8326} & \textbf{25.30}  \\

        \bottomrule
    \end{tabular}
\end{table*}

\textbf{Evaluation Metrics.}
We evaluate the performance of our method from two key perspectives: \textbf{inference efficiency} and \textbf{generation quality}. For inference efficiency, we measure FLOPs to quantify computational complexity and latency (in seconds) to assess real-time inference speed. The evaluation methods for generation quality vary across different tasks, as detailed below:


\textbf{Text-to-video generation.}
VBench is a comprehensive benchmark suite designed to systematically evaluate the performance of video generative models across multiple dimensions, including visual quality, temporal coherence, and semantic alignment. We use VBench as the primary metric for evaluating video generation quality. Based on the 946 prompts provided by VBench, each prompt generates five independent videos, resulting in a total of 4,730 videos used to assess the 16 distinct metrics proposed by VBench. Additionally, we incorporate PSNR, LPIPS, and SSIM as supplementary metrics to further assess the quality of generated videos. To ensure a representative evaluation, we randomly sample 200 prompts from the set provided by VBench and use them to compute these metrics, allowing for a more comprehensive analysis of the video generation performance. 
These settings are consistent with the recent work~\citep{liu2024timestep,zhao2024real}, ensuring a fair and comparable evaluation of video generation quality.

\textbf{Text-to-image generation.}
We conduct inference on 200 DrawBench~\citep{saharia2022photorealistic} prompts to generate images at a resolution of $256 \times 256$. The generated samples are then evaluated using ImageReward~\citep{xu2023imagereward} and CLIP Score~\citep{hessel2021clipscore} as key metrics to assess image quality and text alignment.
Furthermore, we randomly select 30,000 captions from the COCO-2017 dataset [20] to generate 30,000 images with a resolution of $256 \times 256$ to compute the FID-30k metric. 

\subsection{Main Results}
\paragraph{\textbf{Quantitative Comparison on Text-to-Video Generation.}}
As shown in Table \ref{tab: video}, we compare our method with three training-free diffusion inference acceleration approaches—$\Delta$-DiT, PAB, and TeaCache—as well as a step reduction method, with the number of steps set to $50\%$. To verify the generalizability of our method, we conduct comparisons across multiple base models with different settings.
Based on the quantitative comparison results, our method achieves the highest PSNR, LPIPS, and SSIM scores while preserving video quality comparable to the base model without any acceleration techniques. Furthermore, according to VBench metrics, our method achieves the best performance, surpassing $\Delta$-DiT, PAB and Teacache. In evaluating the HunyuanVideo baseline, our method achieves a 2.84$\times$ speedup, demonstrating substantial acceleration. In the VBench evaluation, we observe an improvement in visual quality, though there is a slight decline in some dimension metrics. 
On CogVideoX, our method achieves a 2.84$\times$ speedup while maintaining the best PSNR, LPIPS, and SSIM scores. 
In the comparison with the LTX model, our acceleration method delivering the highest quality at a speedup of 3.18$\times$.

\paragraph{\textbf{Quality Comparison on Text-to-Video Generation.}}
Fig. \ref{fig:vc} compares the videos generated by our method with those produced by the original model, PAB, and Teacache, all of which operate with a lower speedup ratio. Under diverse prompts containing portraits, objects, and complex environments, our method demonstrates superior performance in: (i) preserving consistent facial details across frames, (ii) maintaining structural stability of foreground objects (e.g., chairs), and (iii) ensuring temporal coherence in background elements (e.g., star fields).
The results demonstrate that our generation method effectively maintains the quality and details of the output while achieving faster speed, outperforming the comparison methods.

\paragraph{\textbf{Quantitative Comparison on Text-to-image Generation.}}
In terms of generation quality, our method achieves the best $3.18\times$ speedup while maintaining the highest generation quality, as evidenced by the FID-30k score and ImageReward metric on the DrawBench, with almost no loss in quality compared to the original non-accelerated approach. 
Specifically, our method achieves a $3.18\times$ speedup while preserving superior generation quality, as evidenced by an ImageReward score of 0.7893, which exceeds the base model's score of 0.7717.
Regarding semantic alignment, our method achieves the best CLIP score under faster acceleration, demonstrating superior performance in alignment with the text prompts.

\begin{figure}[htbp]
    \centering
    \includegraphics[width=\linewidth]{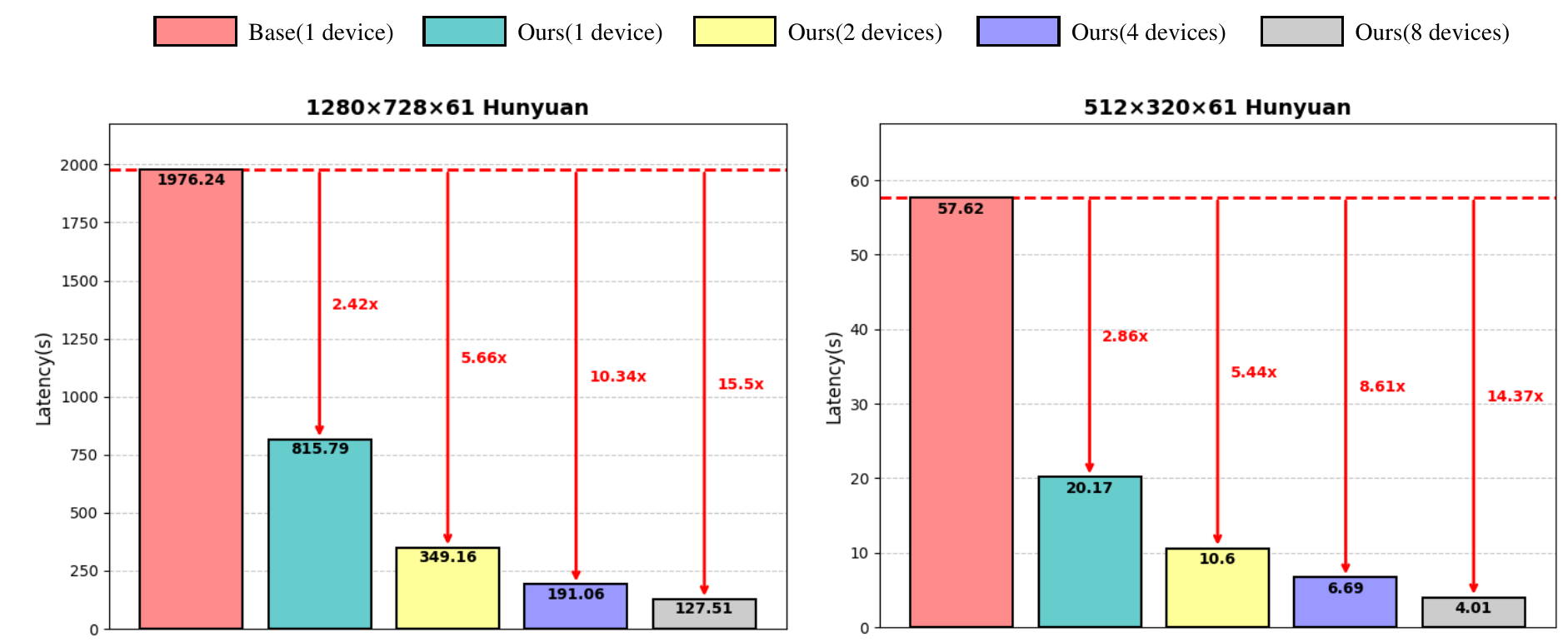} 
    \caption{Evaluation of acceleration and scalability of our method in generating larger videos under single- and multi-GPU configurations.}
    \label{fig:ab2}
\end{figure}
\textbf{Quality Comparison on Text-to-image Generation.}
In the qualitative comparison, we generate images using three representative prompts under identical settings across all methods. As shown in Fig.~\ref{fig:image}, our approach consistently produces the highest visual quality. The figure contrasts results from our method with those from the original model, PAB, and TeaCache, all of which exhibit lower speedup ratios.
Under diverse prompts involving text, vehicles, and natural landscapes, our method demonstrates superior performance in: (i) preserving the clarity and legibility of fonts, (ii) maintaining the structural consistency of objects, such as the front-facing orientation of the car, (iii) retaining fine-grained foreground details, such as the deer in the mountain scene, and (iv) ensuring rich, vibrant color fidelity throughout the generated images.
The results demonstrate that our generation method effectively maintains the quality and details of the output while achieving faster speed, outperforming the comparison methods.

\subsection{Ablation Studies}
We conduct ablation studies on the LTX model to evaluate the effectiveness of our proposed method.
Our ablation studies on the LTX model primarily investigate: (1) the impact of order (e.g., 1st-order vs. 3rd-order approximations) on generation quality and computational efficiency, (2) scaling behavior across varying frame lengths, resolutions, and GPU configurations. 
 
\textbf{Quality-Efficiency trade-off of Order.}
Table \ref{tab:order_comparison} demonstrates how varying approximation orders affect generation quality under the same experimental setup, consistent with our theoretical analysis that higher-order approximations reduce error. Importantly, we observe a substantial enhancement in aesthetic quality with increasing approximation order.

\textbf{Scaling Study.}
To validate the scalability of our method, we evaluated its performance on HunyuanVideo under different video lengths, resolutions, and GPU configurations. As shown in Figure \ref{fig:ab2}, our acceleration approach maintains consistent generation quality across varying computational scales and multi-GPU settings.
\begin{table}[htbp]
\centering
\caption{Performance comparison of different methods in terms of efficiency and visual quality.}
\label{tab:performance_comparison}
\renewcommand{\arraystretch}{1.5}
\setlength{\tabcolsep}{6pt}
\small
\resizebox{\linewidth}{!}{
\begin{tabular}{lcccccc}
\toprule
\multirow{2}{*}{\textbf{Method}} & \multicolumn{3}{c}{\textbf{Efficiency}} & \multicolumn{3}{c}{\textbf{Visual Quality}} \\ 
\cmidrule(lr){2-4} \cmidrule(lr){5-7}
& \textbf{Speedup$\uparrow$} & \textbf{Latency$\downarrow$} & \textbf{FLOPS$\downarrow$} (T) & \textbf{ImageReward$\uparrow$} & \textbf{CLIP$\uparrow$} & \textbf{FID$\downarrow$}  \\ 
\midrule
\rowcolor{gray!20} FLUX-1.dev & 1.00×  & 18.90  & 243.66  & 77.17  & 25.55  & 34.46   \\
50\%-step  & 2.00×  & 9.45  & 121.83  & 76.32  & 25.31  & 35.20   \\
PAB & 1.43×  & 13.22  & 170.39  & 72.39  & 25.59  & 34.14    \\
Teacache & 2.25×  & 8.40  & 108.29  & 75.34  & 25.66  & 32.85    \\
\hline
AB-Cache (Order=2) & \textbf{3.10×} & \textbf{6.10} & \textbf{78.6} & \textbf{78.93} & \textbf{25.73} & \textbf{32.12}    \\
\bottomrule
\end{tabular}
}
\end{table}

\begin{table}[htbp]
\centering
\caption{Comparison of different orders in terms of Quality Score, Semantic Score, and VBench Score.}  
\label{tab:order_comparison}
\resizebox{0.9\columnwidth}{!}{%
\begin{tabular}{lccc}
\hline
\textbf{Order}      & \textbf{Quality Score$\uparrow$} & \textbf{Semantic Score$\uparrow$} & \textbf{VBench Score$\uparrow$} \\ \hline
\rowcolor{gray!20} LTX-Video   & 78.29         & 51.78          & 72.99        \\
First-order & 77.29         & 48.35          & 71.50        \\
2nd-order   & 78.33         & 50.98          & 73.18        \\
3rd-order   & \textbf{78.79}         & 51.13        & \textbf{73.26}        \\ \hline
\end{tabular}%
}
\end{table}




\section{Conclusion}

In this work, we present a theoretically grounded acceleration method for diffusion models by leveraging the k-th order Adams-Bashforth numerical integration to establish a principled relationship between consecutive denoising steps. Our analysis reveals the U-shaped similarity pattern in adjacent outputs and provides a rigorous truncation error bound of $O(h^k)$, ensuring stable and high-quality generation.
The proposed high-order caching mechanism enables a nearly $3\times$ speedup in inference while maintaining performance parity with the base diffusion model, as validated across diverse architectures and tasks (image and video generation). Unlike heuristic reuse strategies, our method offers provable efficiency gains without compromising sample quality.
This work bridges the gap between theoretical analysis and practical acceleration in diffusion models, paving the way for real-time applications.

\bibliographystyle{ACM-Reference-Format}
\bibliography{reference}

\appendix
\clearpage 
\renewcommand{\theequation}{\thesection.\arabic{equation}}
\numberwithin{equation}{section}

\section{$k$-th Order Adams-Bashforth Scheme} \label{adams}

We can adopt the general $k$-th order Adams-Bashforth scheme to obtain a more accurate value of $I$. For simplicity, we use notation $t_k$ instead of $\lambda_{t_k}$ and $f(\tau):= e^{-\tau} \hat{\boldsymbol{\epsilon}}_\theta\left(\boldsymbol{x}_{t_\lambda(\tau)}, t_\lambda(\tau)\right)$, our goal is to get the integral of $f(\tau)$ over the interval \([t_n, t_{n+1}]\), thus:  
\begin{equation*}
    I =  \int_{t_n}^{t_{n+1}} f(\tau) \, d\tau.
\end{equation*}
According to the standrad $k$-th order Adams-Bashforth scheme, we need \(k\) previous time steps \(t_{n-k+1}, t_{n-k+2}, \dots, t_n\) (with uniform step size \(h\)).
We construct a \((k-1)\) degree Lagrange interpolating polynomial \(p(\tau)\) that satisfies:
\[
p(t_{n-j}) = f(t_{n-j}), \quad j = 0, 1, \dots, k-1.  
\]
Introduce the normalized variable \(s = \frac{\tau - t_n}{h}\), mapping the interpolation points to \(s = -(k-1), -(k-2), \cdots, 0\). The polynomial becomes:
\[
p(s) = \sum_{j=0}^{k-1} f(t_{n-j}) \cdot L_j(s),
\]
where \(L_j(s)\ (0 \leq j \leq k-1)\) are Lagrange basis polynomials:
\begin{equation*}
    L_j(s) = \prod_{\substack{m=0 \\ m \neq j}}^{k-1} \frac{s + m}{-j + m}.
\end{equation*}
Substitute \(p(\tau)\) into the integral and change variables to \(s\):
\[
\int_{t_n}^{t_{n+1}} p(\tau) \, d\tau = h \int_{0}^{1} p(s) \, ds = h \sum_{j=0}^{k-1} \left( f(t_{n-j}) \int_{0}^{1} L_j(s) \, ds \right).
\]
Define the integration weights \(b_j\) as:
\[
b_j = \int_{0}^{1} L_j(s) \, ds.
\]
The integral $I$ can be approximated by $h\sum_{j=0}^{k-1} b_j f(t_{n-j})$ with local truncation error $O(h^{k+1})$ and global truncation error $O(h^{k})$. The weights \(b_j\) depend only on \(k\) and are precomputed via integration of the basis polynomials. For example, for \(k=3\) ($3$rd order method), the basis polynomials are:
$$
L_0(s)=\frac{(s+1)(s+2)}{2}, \quad L_1(s)=-\frac{s(s+2)}{1}, \quad L_2(s)=\frac{s(s+1)}{2},
$$
and the coefficients for are derived as:
\[
b_0 = \frac{23}{12}, \quad b_1 = -\frac{4}{3}, \quad b_2 = \frac{5}{12}.
\]
Common coefficients \(k = 1\) to \(5\) are presented in Table \ref{coefficients}.

  \begin{table}[!htp]
    \caption{Weight Coefficients} \label{coefficients}
   \begin{spacing}{1.7}
    \centering
    \begin{tabular}{l|c|c|c|c|r}
        \hline
        Order $k$  & $b_0$ & $b_1$ & $b_2$ & $b_3$ & $b_4$ \\
        \hline
       \quad $1$ & $1$ &  &  & \\
        \hline
        \quad $2$ & $\frac{3}{2}$ & $-\frac{1}{2}$ & & &\\
        \hline
        \quad$3$ & $\frac{23}{12}$ & $-\frac{4}{3}$& $\frac{5}{12}$ & &\\
        \hline
        \quad $4$ & $\frac{55}{24}$ & $-\frac{59}{24}$ & $\frac{37}{24}$  & $-\frac{3}{8}  $ &\\
        \hline
        \quad $5$ & $\frac{1901}{720}$ & $-\frac{2774}{720}$ & $\frac{2616}{720}$  & $-\frac{1274}{720} $ & $\frac{251}{720}$ \ \\
        \hline
    \end{tabular}
   \end{spacing}
\end{table}

\end{document}